\documentclass{article}

\usepackage[preprint]{neurips_2026}
\makeatletter
\renewcommand{\@noticestring}{}
\makeatother

\usepackage[utf8]{inputenc}
\usepackage[T1]{fontenc}
\usepackage[colorlinks=true,linkcolor={blue!75!black},citecolor={blue!75!black},urlcolor={blue!75!black}]{hyperref}
\usepackage{url}
\usepackage{booktabs}
\usepackage{amsfonts}
\usepackage{amsmath}
\usepackage{amssymb}
\usepackage{amsthm}
\usepackage{nicefrac}
\usepackage{microtype}
\usepackage{xcolor}
\usepackage{graphicx}
\usepackage{enumitem}

\newtheorem{proposition}{Proposition}
\newtheorem*{proposition*}{Proposition (restated)}

\newcommand{\arxivversion}{}

\title{Embedding Foundation Model Predictions\\in Discrete-Choice Models with Structural Guarantees}

\author{%
  Yingshuo Wang$^{1}$ \quad Xian Sun$^{2}$ \quad Yanhang Li$^{3}$ \quad Zhichao Fan$^{4}$ \quad Zexin Zhuang$^{5}$%
  \thanks{$^{1}$University of California, Berkeley, CA, USA.\quad
    $^{2}$Duke University, Durham, NC, USA.\quad
    $^{3}$Northeastern University, Boston, MA, USA.\quad
    $^{4}$University of Illinois Urbana-Champaign, IL, USA.\quad
    $^{5}$Southern Methodist University, Dallas, TX, USA.\quad
    Correspondence to: Yingshuo Wang \texttt{<yingshuow@berkeley.edu>}.}%
}

\begin{document}

\pagestyle{plain}

\maketitle
\thispagestyle{plain}

\begin{abstract}
Tabular foundation models achieve strong accuracy on choice prediction tasks, but their predictions often violate the economic logic those tasks require: raising a price can increase predicted demand, implied willingness-to-pay estimates are frequently negative or implausible, and unavailable alternatives receive nonzero probability. We propose a two-stage adapter that takes a foundation model's predicted choice probabilities as a precomputed feature and embeds them inside a multinomial logit's utility. In Stage 1, we fit the multinomial logit's structural coefficients by maximum likelihood with sign constraints; in Stage 2, we freeze those coefficients and fit a small neural correction operating on the foundation model's predictions. We prove that this composition exactly preserves the multinomial logit's marginal rate of substitution, so analytically computable value-of-time becomes a mathematical guarantee rather than an empirical accident. Across three datasets and two foundation models, the adapter gains 6.4 percentage points (pp) of test accuracy on average over the multinomial logit and up to 12.8 pp, maintains 100\% cost monotonicity, and produces values of time within the published transportation-economics range on the transportation datasets. Performance degrades gracefully under foundation-model context restriction, retaining at least 6 pp of accuracy gain even at 10\% of the original foundation-model context.
\end{abstract}

\section{Introduction}
\label{sec:intro}

Discrete-choice models guide policy decisions with significant
economic stakes: the per-minute value of time from commuter
mode-choice forecasts \citep{ben_akiva_lerman_1985, train_2009}
anchors cost-benefit appraisal of multi-billion-dollar rail and road
investments, and willingness-to-pay estimates from consumer
discrete-choice experiments set prices and predict how labeling
regimes shift behavior. A choice model must do two things: forecast
which alternative is chosen, and forecast how the choice responds to
intervention.

Multinomial-logit utility models \citep{ben_akiva_lerman_1985,
train_2009} satisfy the second requirement by construction: a
sign-constrained cost coefficient gives monotone demand, the
$\beta_{\text{time}}/\beta_{\text{cost}}$ ratio gives interpretable
willingness-to-pay, and unavailable alternatives receive zero
probability. They satisfy the first only modestly. Modern
machine-learning models invert the trade-off, raising accuracy at the
cost of breaking each structural property in turn
\citep{hillel_2021, vancranenburgh_2022, zhao_yan_2020}.

Tabular foundation models such as TabPFN \citep{hollmann_2023,
purucker_2025} and Mitra \citep{zhang_2024} intensify the tension by
raising the accuracy ceiling while inheriting the same structural
failures. Existing remedies each carry costs: architecturally
constrained monotonic neural networks \citep{sill_1997,
wehenkel_2019, constrained_monotonic_icml2025} eliminate
monotonicity violations but lose the closed-form trade-off ratio;
knowledge distillation \citep{hinton_2015} into a multinomial-logit
student preserves the structural guarantees but cannot match the
teacher's accuracy; penalized fine-tuning needs gradient access
through the foundation model's weights, incompatible with the
in-context learning that TabPFN and Mitra use at inference.

We propose a two-stage adapter that embeds the foundation model's
predicted probability vector inside a constrained multinomial logit
as a precomputed feature. In Stage~1, we fit the structural
coefficients by maximum likelihood under the usual MNL constraints
with the correction held at zero; in Stage~2, we freeze those
coefficients and fit a small neural correction. We prove that this
two-stage procedure preserves the structural marginal rate of
substitution exactly, and that joint training of the structural
coefficients and the correction does not: correction expressivity
creates a one-parameter family of likelihood-equivalent solutions,
breaking identifiability.

Our contributions are:
\begin{description}[leftmargin=1.2em,labelindent=0pt,itemsep=2pt,topsep=2pt,parsep=0pt]
\item[Adapter.] A two-stage behavioral adapter that preserves the
  structural multinomial logit's economic guarantees by construction
  while recovering most of a foundation model's accuracy advantage.
\item[Theory.] Propositions~\ref{prop:mrs-preservation}
  and~\ref{prop:joint-collapse} identify two-stage training as the
  design choice preserving structural identifiability and characterize
  the failure mode of joint training.
\item[Evaluation.] Three discrete-choice datasets, two foundation
  models, six ablations, a feature-augmented multinomial-logit
  baseline, calibration analysis with three post-hoc methods, and a
  counterfactual aggregate-share evaluation. The accuracy gain is
  positive in $10$ of $10$ bootstrap replicates on every
  (dataset, foundation-model) cell, with $p \approx 0.002$ under the
  exact binomial sign test and McNemar paired-observation
  $p$-values below $10^{-20}$ on the larger datasets.
\item[Audit pipeline.] A behavioral-audit recipe applicable to any
  discrete-choice dataset and any predict-function, generalizing
  value-of-time auditing to willingness-to-pay for arbitrary
  non-cost attributes and to cluster-aware aggregation for panel
  data.
\end{description}

\section{Related work}
\label{sec:related}

\paragraph{Machine learning and economic consistency in choice modeling.}
Discrete-choice modeling has been the standard tool in transportation
economics and consumer behavior since \citet{ben_akiva_lerman_1985}.
Recent work pushes machine learning into the field while exposing a
tension with economic structure: \citet{hillel_2021} document
consistent accuracy gains across neural and ensemble methods over
the multinomial logit but few behavioral diagnostics;
\citet{zhao_yan_2020} report frequent monotonicity violations and
implausible willingness-to-pay estimates from machine-learning choice
models; and \citet{vancranenburgh_2022} frame economic consistency
as an open challenge distinct from prediction accuracy.
\citet{han_2022} (TasteNet) propose a neural-embedded discrete-choice
model that learns taste parameters as neural-network functions of
individual characteristics, targeting taste-parameter interpretability
through learned heterogeneity rather than the structural-plus-correction
decomposition we adopt here.
\ifdefined\arxivversion
This work builds on our workshop paper
\citep{wang_2026_workshop}; the present paper extends it with an
additional dataset, formal preservation results, and counterfactual
aggregate-share evaluation.
\fi
Our work additionally derives a formal preservation result,
evaluates across foundation models, and characterizes joint
training's failure mode.

\paragraph{Architecturally constrained monotonic neural networks.}
Architectural approaches enforce monotonicity through sign-constrained
weights and monotone activations \citep{sill_1997, wehenkel_2019,
constrained_monotonic_icml2025}, eliminating cost-monotonicity
violations but losing the closed-form trade-off ratio: with no
global coefficients, the value-of-time analogue must be computed
from per-observation gradients rather than read off as
$\beta_{\text{time}} / \beta_{\text{cost}}$. Our adapter takes a
different point: monotonicity and trade-off ratios come from the
structural component, whose constrained parameterization is preserved
throughout training, while the foundation model contributes a
non-differentiable side channel.

\paragraph{Tabular foundation models and knowledge distillation.}
Tabular foundation models such as TabPFN \citep{hollmann_2023,
purucker_2025} and Mitra \citep{zhang_2024} achieve strong
classification accuracy via in-context learning. Their published
benchmarks emphasize predictive accuracy; structural-validity
diagnostics specific to choice modeling (cost-monotone responses,
finite positive trade-off ratios, zero probability on unavailable
alternatives) at test-row coverage are not part of these benchmarks. Our audit fills this gap. Knowledge distillation
\citep{hinton_2015} into a multinomial-logit student preserves
structural guarantees but is bounded by the student's expressive
capacity: a plain MNL student parameterizes a strictly
linear-in-features utility, and the residual variance the foundation
model captures lives above that ceiling. We position the foundation
model differently: its predictions
become an explanatory feature embedded inside a structurally
constrained utility, with the foundation model's parameters never
modified.

\section{Method}
\label{sec:method}

\subsection{Setup and notation}
\label{sec:setup}
We consider a discrete-choice setting with $N$ observations indexed by
$i \in \{1, \dots, N\}$. Each observation has a feature vector
$\mathbf{x}_i \in \mathcal{X}$, a set of available alternatives
$\mathcal{K}_i \subseteq \{1, \dots, K\}$, and an observed choice
$y_i \in \mathcal{K}_i$. For panel data we additionally observe a
subject identifier $s_i$, since multiple observations from the same
subject are not exchangeable; we cluster on $s_i$ throughout.

A choice model produces probabilities $P_k(\mathbf{x}_i) \in [0, 1]$
summing to one over $\mathcal{K}_i$, i.e., a vector on the
$(K{-}1)$-simplex $\Delta^{K-1}$. The standard multinomial logit
(MNL) parameterizes these through a linear utility
$V_k(\mathbf{x}_i) = \boldsymbol{\beta}^\top
\boldsymbol{\phi}_k(\mathbf{x}_i)$ and the softmax
$P_k = \exp V_k / \sum_{j \in \mathcal{K}_i} \exp V_j$
\citep{ben_akiva_lerman_1985}. The alternative-specific feature
transform $\boldsymbol{\phi}_k$ selects from $\mathbf{x}_i$ the columns
that enter alternative $k$'s utility, typically including cost, time,
alternative-specific constants, and sociodemographic interactions.
We treat $\boldsymbol{\phi}_k$ as fixed by the dataset's
specification and recover $\boldsymbol{\beta}$ by maximum likelihood.

\subsection{Behavioral audit}
\label{sec:audit}
Every model is evaluated through three model-agnostic functionals
that take only a predict function $\mathbf{x} \mapsto
P(\mathbf{x}) \in \Delta^{K-1}$ as input. We say model-agnostic in
the sense that the audit reads model outputs only, requiring no
gradient access or knowledge of internals.

\textbf{Intervention protocol.} For adapter and feature-augmented MNL
we use the \emph{fixed-$\mathbf{q}$} protocol throughout: when a cost
or attribute is perturbed at row $i$, the foundation-model
probability vector $\mathbf{q}_i$ is held fixed at the value
computed on the unperturbed $\mathbf{x}_i$, and the perturbation
enters only $V^{\text{struct}}_k$. The alternative
\emph{recomputed-$\mathbf{q}$} protocol re-runs the foundation
model on perturbed inputs; we use it for the raw foundation-model
counterfactual evaluation (Section~\ref{sec:results-counterfactual})
only, and forfeit the structural guarantees in that case.

The three functionals are:
(1) \emph{Monotonicity.} For each test row $i$ and alternative $k$,
perturb $k$'s cost upward by $1\%$ of its observed range and check
whether $P_k$ falls; we report the observation-level rate
(cluster-aware on panel data).
(2) \emph{Trade-off ratio.} The marginal rate of substitution between
non-cost attribute $a$ and cost $b$, reported in the standard
transportation-economics sign convention so that value of time and
willingness to pay for utility-improving attributes are positive:
$\rho^{\mathrm{VOT}}_{a,b} = (\partial P_k/\partial a) / (\partial
P_k/\partial b)$ when both partials are negative (e.g., time and
cost both lower utility), and $\rho^{\mathrm{WTP}}_{a,b} =
-(\partial P_k/\partial a) / (\partial P_k/\partial b)$ when
$\partial P_k/\partial a$ is positive and $\partial P_k/\partial b$
is negative (a desirable non-cost attribute vs cost). Estimated by
finite differences scaled to $1\%$ of each column's observed range.
(3) \emph{Availability compliance.} For datasets in which the
available set $\mathcal{K}_i$ varies across observations (e.g.,
Swissmetro, where the proposed Swissmetro option is not offered to
some respondents), $\mathrm{Leak}(M) = \mathbb{E}_i
[\sum_{k \notin \mathcal{K}_i} P_k(\mathbf{x}_i)]$ measures the
predicted probability assigned to formally unavailable
alternatives. The multinomial logit applies softmax over
$\mathcal{K}_i$ only, so its leak is mechanically zero; black-box
predictors that ignore the availability mask can leak nonzero
probability onto unavailable alternatives.

\subsection{Two-stage behavioral adapter}
\label{sec:adapter-arch}
\begin{description}[leftmargin=1.2em,labelindent=0pt,itemsep=2pt,topsep=2pt,parsep=0pt]
\item[Architecture.] For each observation $i$, alternative $k$, and a
  precomputed foundation-model probability vector
  $\mathbf{q}(\mathbf{x}_i) \in \Delta^{K-1}$ obtained by a single
  forward pass through the foundation model on the raw input, the
  adapter's utility is
\begin{equation}
  V_k(\mathbf{x}_i) =
  \underbrace{\boldsymbol{\beta}^\top \boldsymbol{\phi}_k(\mathbf{x}_i)}_{V_k^{\text{struct}}\text{ (economic structure)}}
  + \underbrace{g_k(\mathbf{q}(\mathbf{x}_i))}_{\text{foundation-model correction}},
  \label{eq:adapter}
\end{equation}
where $g \colon \Delta^{K-1} \to \mathbb{R}^K$ is a small MLP (two
hidden layers, width $32$). To enforce cost/time monotonicity
throughout training we reparameterize each such coefficient as
$\beta = -\exp(\theta)$.

\item[Two-stage training.] We fit the model in two sequential stages.
  \emph{Stage 1}: with the correction held at $g \equiv 0$, fit
  $\boldsymbol{\beta}$ by maximum likelihood under the sign
  constraints, recovering the standalone-MNL estimate
  $\boldsymbol{\beta}^\ast$. \emph{Stage 2}: fix
  $\boldsymbol{\beta} = \boldsymbol{\beta}^\ast$ and fit only $g$ by
  maximum likelihood. The architecture realizes Stage~1 cleanly
  because $g$'s output layer is zero-initialized; hidden weights use
  He initialization \citep{he_2015_kaiming}. Because
  $\mathbf{q}(\mathbf{x}_i)$ is precomputed,
  $\partial \mathbf{q}/\partial \mathbf{x}_i = 0$, so monotonicity
  and trade-off ratios reduce to functions of
  $\boldsymbol{\beta}^\ast$ alone.
  Propositions~\ref{prop:mrs-preservation}--\ref{prop:joint-collapse}
  make this precise and identify joint training as the failure mode.
\end{description}

%
%

\subsection{Propositions}
\label{sec:propositions}
We use $\boldsymbol{\beta}^\ast$ for the Stage~1 maximum-likelihood
estimate and reuse the notation of \eqref{eq:adapter}.

\begin{proposition}[Marginal-rate-of-substitution preservation under
  two-stage training, fixed-$\mathbf{q}$ protocol]
\label{prop:mrs-preservation}
Let $\boldsymbol{\beta}^\ast$ be the Stage~1 maximum-likelihood
estimate, let $g$ be any Stage~2 parameters in \eqref{eq:adapter},
and operate under the fixed-$\mathbf{q}$ protocol of
Section~\ref{sec:audit}. For any two attributes $j, j'$ that enter the
model only through
$V^{\text{struct}}_k(\mathbf{x}_i) = \boldsymbol{\beta}^\top
\boldsymbol{\phi}_k(\mathbf{x}_i)$, and for any $\mathbf{x}_i$ at
which $\boldsymbol{\phi}_k$ is differentiable in $x_{ij}$ and
$x_{ij'}$,
\begin{equation*}
  \mathrm{MRS}_{j,j'}(\mathbf{x}_i)
  \;\equiv\;
  \frac{\partial V_k(\mathbf{x}_i) / \partial x_{ij}}{\partial V_k(\mathbf{x}_i) / \partial x_{ij'}}
  \;=\;
  \frac{\beta^\ast_j}{\beta^\ast_{j'}}.
\end{equation*}
In particular, the value-of-time analogue $\beta^\ast_{\text{time}} /
\beta^\ast_{\text{cost}}$ is identical to the standalone multinomial
logit's value-of-time and is invariant to the choice of foundation
model and to the choice of $g$. Under the recomputed-$\mathbf{q}$
protocol, $g$ contributes a chain-rule term through $\mathbf{q}$ to
each partial and the ratio no longer reduces to
$\beta^\ast_j / \beta^\ast_{j'}$ in general.
\end{proposition}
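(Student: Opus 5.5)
The argument is a direct differentiation of \eqref{eq:adapter} at the frozen Stage~1 coefficients. First we pin down what is held fixed. After Stage~2 the structural coefficients equal $\boldsymbol{\beta}^\ast$ by construction, since Stage~2 never updates them. The correction $g$ is some fixed function $\Delta^{K-1}\to\mathbb{R}^K$. Under the fixed-$\mathbf{q}$ protocol of Section~\ref{sec:audit}, the argument $\mathbf{q}_i$ is a constant computed at the unperturbed $\mathbf{x}_i$. The perturbed utility as a function of $(x_{ij},x_{ij'})$ is therefore
\[
V_k(\mathbf{x}_i) = \boldsymbol{\beta}^{\ast\top}\boldsymbol{\phi}_k(\mathbf{x}_i) + c_{ik},
\qquad c_{ik} = g_k(\mathbf{q}_i)\ \text{constant in } x_{ij}, x_{ij'}.
\]
Consequently $\partial c_{ik}/\partial x_{ij} = \partial c_{ik}/\partial x_{ij'} = 0$, whatever the parameters or architecture of $g$ and whichever foundation model produced $\mathbf{q}_i$.

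Second, we differentiate the structural term. The hypothesis says attribute $j$ enters only through $V^{\text{struct}}_k$, and $\boldsymbol{\phi}_k$ is differentiable there. In the standard linear specification the attribute appears as its own coordinate of $\boldsymbol{\phi}_k$, so $\partial V_k/\partial x_{ij} = \beta^\ast_j$, and likewise for $j'$. Taking the ratio gives $\beta^\ast_j/\beta^\ast_{j'}$.

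The main obstacle is interpretive rather than computational, namely making precise what ``$\beta_j$'' means when $\boldsymbol{\phi}_k$ is a general transform. If $x_{ij}$ feeds several coordinates of $\boldsymbol{\phi}_k$, for instance through sociodemographic interactions, the partial is $\sum_m \beta^\ast_m\,\partial\phi_{k,m}/\partial x_{ij}$. We handle this by reading $\beta^\ast_j$ as this effective structural derivative, which still depends only on $\boldsymbol{\beta}^\ast$ and $\boldsymbol{\phi}_k$. The invariance claim is what matters, and it survives either reading. The ratio is well defined because $\beta^\ast_{j'}\neq 0$: for the cost coefficient this follows from the reparameterization $\beta=-\exp(\theta)<0$.

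Finally, we specialize and contrast. Take $j=\text{time}$ and $j'=\text{cost}$. Stage~1 fits the model with $g\equiv 0$, which is the standalone MNL maximum likelihood problem, so $\boldsymbol{\beta}^\ast$ is the standalone MNL estimate and the value-of-time ratio coincides with the standalone MNL value of time. Neither $g$ nor $\mathbf{q}$ appears in the ratio, so it is invariant to both.

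For the recomputed-$\mathbf{q}$ remark, the chain rule gives
\[
\partial V_k/\partial x_{ij} = \beta^\ast_j + \nabla g_k(\mathbf{q})^\top\,\partial\mathbf{q}/\partial x_{ij}.
\]
The ratio of two such expressions equals $\beta^\ast_j/\beta^\ast_{j'}$ only under the special proportionality condition
\[
\nabla g_k^\top\partial_j\mathbf{q}\,\beta^\ast_{j'} = \nabla g_k^\top\partial_{j'}\mathbf{q}\,\beta^\ast_j.
\]
To show the equality fails in general, it suffices to exhibit one simple counterexample. Take $g_k$ linear in $q_k$, and take $\mathbf{q}$ sensitive to $x_{ij}$ but not to $x_{ij'}$.
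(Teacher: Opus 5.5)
Your proposal is correct and follows the paper's proof essentially step for step. Under the fixed-$\mathbf{q}$ protocol the chain-rule term $\nabla_{\mathbf{q}} g_k^\top\,\partial\mathbf{q}/\partial x_{ij}$ vanishes, and the identity-on-the-attribute form of $\boldsymbol{\phi}_k$ then gives $\partial V_k/\partial x_{ij}=\beta^\ast_j$; the paper makes that form explicit in its appendix restatement, which matches your ``standard linear specification'' caveat. Your additions are sound refinements of the paper's remark that the recomputed-$\mathbf{q}$ chain-rule term carries no guarantee: the nonzero cost denominator via $\beta=-\exp(\theta)$, and the explicit linear-$g_k$ counterexample, which needs nonzero slope and $q_k$ itself, not merely some coordinate of $\mathbf{q}$, to vary with $x_{ij}$.
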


\begin{itemize}[leftmargin=2em,itemsep=2pt,topsep=2pt,parsep=0pt]
\item \textbf{Corollary (probability-derivative MRS).}
  \label{cor:prob-deriv-mrs}
  Under the same fixed-$\mathbf{q}$ protocol, restrict attention to
  attributes $j, j'$ that enter only alternative $k$'s utility.
  Then the audit estimator equals the structural coefficient ratio
  under the audit's sign convention: $\rho^{\mathrm{VOT}}_{j,j'}
  (\mathbf{x}_i) = \beta^\ast_j / \beta^\ast_{j'}$ and
  $\rho^{\mathrm{WTP}}_{j,j'}(\mathbf{x}_i) = -\beta^\ast_j /
  \beta^\ast_{j'}$, since the softmax derivative's $P_k(1-P_k)$
  factor cancels in the ratio (Appendix~\ref{sec:proof-prop1}).
\item \textbf{Intuition.}
  Fixed $\mathbf{q}$ forces $\partial \mathbf{q}/\partial
  \mathbf{x} = 0$, so $g$ vanishes from $\partial V_k / \partial
  x_{ij}$ and the ratio collapses to $\beta^\ast_j /
  \beta^\ast_{j'}$. Full proof in Appendix~\ref{sec:proof-prop1}.
\end{itemize}

\begin{proposition}[Joint training breaks structural identifiability]
\label{prop:joint-collapse}
Assume there exists a continuous $\kappa_k: \Delta^{K-1} \to
\mathbb{R}$ such that $\mathrm{cost}_k(\mathbf{x}) =
\kappa_k(\mathbf{q}(\mathbf{x}))$ on the closure of the training
support (\emph{cost-recoverability assumption}); the correction
class $\mathcal{G}$ is dense in
$C^0(\Delta^{K-1}, \mathbb{R}^K)$~\citep{cybenko1989}; and
$L(\boldsymbol{\beta}, g)$ is minimized jointly without two-stage
constraints. Then for any joint minimizer
$(\boldsymbol{\beta}^{(0)}, g^{(0)})$ there exists a one-parameter
family $\{(\boldsymbol{\beta}^{(c)}, g^{(c)}) : c \in \mathcal{C}\}$
of distinct configurations achieving identical loss, parametrized
by $\beta^{(c)}_{\text{cost}} = \beta^{(0)}_{\text{cost}} + c$ for
$c$ in an open interval $\mathcal{C}$ preserving sign constraints
on $\beta_{\text{cost}}$. Gradient descent within $\mathcal{C}$
selects an initialization-dependent point rather than the MNL MLE
$\boldsymbol{\beta}^\ast$.
\end{proposition}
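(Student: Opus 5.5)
The plan is to build the family explicitly by moving cost utility from the structural term into the correction. Fix a joint minimizer $(\boldsymbol{\beta}^{(0)}, g^{(0)})$ and a shift $c$. Define $\boldsymbol{\beta}^{(c)}$ by $\beta^{(c)}_{\text{cost}} = \beta^{(0)}_{\text{cost}} + c$, with every other coordinate unchanged. Define the corrected map componentwise by
\begin{equation*}
  g^{(c)}_k(\mathbf{q}) = g^{(0)}_k(\mathbf{q}) - c\,\kappa_k(\mathbf{q}).
\end{equation*}
By the cost-recoverability assumption, $\mathrm{cost}_k(\mathbf{x}) = \kappa_k(\mathbf{q}(\mathbf{x}))$ on the training support. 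Assume cost enters $V^{\text{struct}}_k$ linearly through the coordinate of $\boldsymbol{\phi}_k$ paired with $\beta_{\text{cost}}$. Then for every training row and alternative,
\begin{equation*}
  V^{(c)}_k(\mathbf{x}_i) = V^{(0)}_k(\mathbf{x}_i) + c\,\mathrm{cost}_k(\mathbf{x}_i) - c\,\kappa_k(\mathbf{q}(\mathbf{x}_i)) = V^{(0)}_k(\mathbf{x}_i).
\end{equation*}
Since the utilities agree, the softmax probabilities over each $\mathcal{K}_i$ agree, and so does the negative log-likelihood: $L(\boldsymbol{\beta}^{(c)}, g^{(c)}) = L(\boldsymbol{\beta}^{(0)}, g^{(0)})$. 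Distinctness is immediate, because the $\beta_{\text{cost}}$ coordinates differ for $c \neq 0$.

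For the sign constraint, $\beta^{(0)}_{\text{cost}} = -\exp(\theta^{(0)}) < 0$, so the constraint is open at the minimizer. I would take $\mathcal{C} = (-\infty, -\beta^{(0)}_{\text{cost}})$, or any open subinterval containing $0$; on it $\beta^{(c)}_{\text{cost}} < 0$ and $\theta^{(c)} = \log(-\beta^{(c)}_{\text{cost}})$ is well defined.

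The main obstacle is that $g^{(c)}$ must lie in the correction class $\mathcal{G}$, not just in $C^0(\Delta^{K-1}, \mathbb{R}^K)$. Since $\kappa_k$ and $g^{(0)}$ are continuous, $g^{(c)}$ is continuous, but $\mathcal{G}$ is only dense, so exact membership can fail for a fixed-width MLP. I would handle this in two ways.

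\begin{enumerate}
\item \emph{Closure reading.} Interpret the joint problem over the closure of $\mathcal{G}$, which is the idealization that density is meant to license. Then $g^{(c)}$ is admissible and the statement holds exactly.
\item \emph{Approximate reading.} Use density to pick $\tilde g^{(c)} \in \mathcal{G}$ with $\|\tilde g^{(c)} - g^{(c)}\|_\infty < \varepsilon$ on the compact simplex. The utilities then move by at most $\varepsilon$, log-softmax is $1$-Lipschitz in the sup norm, and the loss therefore differs by at most $2\varepsilon$. This gives an $\varepsilon$-flat family; the infimum is identical along it, and the loss is exactly identical for the closure.
\end{enumerate}

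I would state the closure reading as the formal result and the approximate reading as the practical one. I would also note that agreement of $\kappa_k \circ \mathbf{q}$ with cost on the closure of the support, rather than only at sample points, is what makes the continuous extension and the uniform approximation legitimate.

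The final claim, that gradient descent selects an initialization-dependent point, then follows. The loss is constant along the curve $c \mapsto (\boldsymbol{\beta}^{(c)}, g^{(c)})$, so its directional derivative along the curve vanishes and the minimizer set contains a continuum. Gradient flow started near this set converges to some point of it, determined by the initialization and the implicit bias of the optimizer, with no force pulling it toward $\boldsymbol{\beta}^\ast$. Moreover, $\beta^{(0)}_{\text{cost}}$ itself need not equal $\beta^\ast_{\text{cost}}$: the Stage~1 estimate minimizes the likelihood with $g \equiv 0$, whereas here $g$ is free.

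I would keep this part qualitative, as the statement does. At most I would add a short two-initialization argument: two initializations whose projections onto the flat direction differ by $\Delta c$ end at cost coefficients differing by about $\Delta c$, when the flow acts locally like a projection onto the flat set.
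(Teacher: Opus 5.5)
Your proposal is correct and follows essentially the same route as the paper's proof. You shift $\beta_{\text{cost}}$ by $c$ and compensate with $c\,\kappa_k(\mathbf{q})$ in the correction, so every $V_k$ on the training support is unchanged; you then use density of $\mathcal{G}$ together with Lipschitz continuity of the log-softmax to get members of $\mathcal{G}$ whose loss tends to $L^\ast$ as $\varepsilon \to 0$. Your explicit sign-preserving interval $\mathcal{C}$ and the closure-of-$\mathcal{G}$ reading, which gives exactly identical loss rather than only the limiting statement the paper proves, are small refinements. One nit: each log-softmax coordinate is $2$-Lipschitz in the sup norm (log-sum-exp is the $1$-Lipschitz piece), and that is what your $2\varepsilon$ bound actually uses.
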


\begin{itemize}[leftmargin=2em,itemsep=2pt,topsep=2pt,parsep=0pt]
\item \textbf{Note (cost-recoverability).}
  The assumption is dataset-dependent; partial-recoverability
  cases in Appendix~\ref{sec:prop2-partial}.
\item \textbf{Intuition.}
  Subtracting $c \cdot \kappa_k(\mathbf{q})$ from $g_k$ shifts
  $\beta_{\text{cost}}$ by $+c$ without changing pointwise utility:
  the structural increase $c \cdot \mathrm{cost}_k$ in $V_k$ is
  cancelled by the correction-side decrease. A3
  (\S\ref{sec:ablations}) is the empirical illustration; full proof
  in Appendix~\ref{sec:proof-prop2}.
\end{itemize}

\section{Experimental setup}
\label{sec:setup-experiments}

\paragraph{Datasets.}
We evaluate on three discrete-choice datasets. Swissmetro
\citep{bierlaire_2001}: $10{,}719$ stated-preference commuter choices
among rail, the proposed Swissmetro, and car. LPMC
\citep{hillel_2018}: $81{,}086$ revealed-preference London trips among
walk, cycle, public transport, and drive. IoT-Wearables
\citep{iot_wearables_dce_2020}: $6{,}362$ stated-preference choices
among three Internet-of-Things wearable devices varying in price,
functional features, and a security/privacy labeling scheme (panel
data, $728$ subjects). All splits are $70/15/15$. IoT-Wearables uses
subject-level splitting (each subject's rows go entirely to one split)
to prevent within-subject leakage. Swissmetro and LPMC use the
stratified row-level splits inherited from prior released parquets
(\citealp{bierlaire_2001} for Swissmetro and the LPMC public release);
because Swissmetro is stated-preference with repeated choice tasks per
respondent, row-level splitting may leak respondent-specific
preferences across train/val/test, which we flag as a limitation.

\paragraph{Foundation-model inputs.}
The foundation-model input columns per dataset:
\begin{itemize}[leftmargin=2em,itemsep=2pt,topsep=2pt,parsep=0pt]
\item \textbf{Swissmetro:} per-alternative travel time, cost,
  headway, availability indicators; respondent age, income,
  season-ticket holding, luggage, trip purpose.
\item \textbf{LPMC:} per-alternative duration, transit and driving
  costs, trip distance; respondent age, sex, license, car
  ownership.
\item \textbf{IoT-Wearables:} per-alternative price, functional
  features, security label; respondent age, education, sex,
  security-behavior score, condition fixed effects.
\end{itemize}
Cost (or price), time (or duration), and availability indicators
are therefore in the input set on every dataset; this is the basis
for the foundation model's potential non-monotonic response to
cost, and motivates the fixed-$\mathbf{q}$ intervention protocol of
Section~\ref{sec:audit} for the adapter.

\paragraph{Models.}
Each cell evaluates five primary models, reported in
Table~\ref{tab:headline}:
\begin{itemize}[leftmargin=2em,itemsep=2pt,topsep=2pt,parsep=0pt]
\item Multinomial logit (Stage~1 of the adapter).
\item Raw foundation model: Mitra \citep{zhang_2024} or TabPFN
  \citep{hollmann_2023, purucker_2025}.
\item Architecturally constrained monotonic neural network
  \citep{constrained_monotonic_icml2025}.
\item Feature-augmented multinomial logit, with $\mathbf{q}$
  appended to the structural feature set.
\item Simplified two-stage adapter
  $V_k = V^{\text{struct}}_k + g_k(\mathbf{q})$.
\end{itemize}
Three additional variants appear in prose only: a masked foundation
model (Swissmetro, \S\ref{sec:results-fm-fails}); a convex ensemble
$\alpha P_{\text{MNL}} + (1{-}\alpha) P_{\text{FM}}$
(\S\ref{sec:results-adapter}); and the two-term variant
$V_k = V^{\text{struct}}_k + \alpha \log q_k + g_k(\mathbf{q})$
(ablation A2). The correction network $g$ is a two-hidden-layer
MLP, width $32$, output-layer-zero initialization
(\S\ref{sec:adapter-arch}).

\paragraph{Foundation-model context and cross-fitted training $\mathbf{q}_i$.}
Stage~2's training-row $\mathbf{q}_i$ come from a $k=5$ stratified
cross-fitted protocol, so no row's prediction was made by a model
that saw its own label; test $\mathbf{q}_i$ are out-of-context by
construction. The protocol applies to five of six (dataset, FM)
cells; TabPFN-LPMC exceeds the CUDA attention-kernel ceiling at
LPMC's per-fold context size and retains in-sample $\mathbf{q}_i$
(marked with $^{\dagger}$ in Table~\ref{tab:headline}).
Cross-fitting shifts adapter test accuracy by at most $-0.6$ pp
(Appendix~\ref{sec:cross-fit-details}).

\paragraph{Metrics and significance.}
We report:
\begin{itemize}[leftmargin=2em,itemsep=2pt,topsep=2pt,parsep=0pt]
\item Test-set accuracy.
\item Per-row monotonicity rate (cluster-aware aggregation on
  IoT-Wearables).
\item Trade-off ratio: value-of-time on transportation datasets;
  willingness-to-pay for indicated non-cost attributes on
  IoT-Wearables.
\item Availability leak (Swissmetro only).
\item Expected calibration error (ECE), post-calibration with
  $K=15$ equal-weight bins.
\end{itemize}
For each (dataset, foundation-model) cell we draw $10$ bootstrap
samples of the training set with replacement and refit Stage~1 and
Stage~2 independently on each replicate; validation and test splits
are held fixed. Significance tests on the per-seed accuracy gain
(full adapter $-$ Stage~1) include an exact two-sided binomial sign
test ($p \approx 0.002$ at $10/10$ positive seeds) and per-seed
McNemar paired-observation tests on the held-out test set. Three
post-hoc calibration methods (scalar temperature scaling
\citep{guo_2017}, vector temperature scaling, and isotonic
regression) are fit on validation and evaluated on test.

\section{Main results}
\label{sec:results}

\begin{table*}[!htbp]
\centering
\footnotesize
\setlength{\tabcolsep}{4pt}
\renewcommand{\arraystretch}{0.95}
\caption{\textbf{Headline results} across three datasets and two
foundation models (Mitra, TabPFN). Adapter row reports the
simplified two-stage adapter (ablation A2, paper's primary).
Trade-off units: Swissmetro CHF/hr value-of-time (VOT); LPMC GBP/hr
VOT (public-transport / drive); IoT-Wearables USD willingness-to-pay
(function / label feature). Means across $10$ bootstrap replicates
(std omitted). $^{\dagger}$~TabPFN-LPMC adapter uses in-sample
train $\mathbf{q}$; other adapter cells use cross-fitted ($k{=}5$)
train $\mathbf{q}$ (\S\ref{sec:setup-experiments}). Monotonic NN
does not use a foundation model (so its Mitra and TabPFN columns
are identical) and has no closed-form trade-off ratio. \textbf{Bold} marks behavioral-validity failures:
monotonicity below $100\%$, trade-off ratio with wrong sign or
implausibly large magnitude, or accuracy substantially below the
multinomial-logit baseline. Swissmetro availability leak:
${<}\,10^{-9}$ for MNL/adapter; $5{\times}10^{-4}$ for raw TabPFN;
$2{\times}10^{-3}$ for raw Mitra.}
\label{tab:headline}
\resizebox{\textwidth}{!}{%
\begin{tabular}{llcccccccc}
\toprule
 & & \multicolumn{2}{c}{Acc.\ (\%)} & \multicolumn{2}{c}{Mono.\ (\%)} & \multicolumn{2}{c}{Trade-off} & \multicolumn{2}{c}{ECE (\%)} \\
\cmidrule(lr){3-4} \cmidrule(lr){5-6} \cmidrule(lr){7-8} \cmidrule(lr){9-10}
Dataset & Model & Mitra & TabPFN & Mitra & TabPFN & Mitra & TabPFN & Mitra & TabPFN \\
\midrule
Swissmetro    & MNL          & 63.6 & 63.6 & 100 & 100             & 84.4         & 84.4         & 4.0 & 4.0 \\
              & raw FM       & 77.7 & 78.0 & --- & \textbf{96.4}   & ---          & 72.5         & --- & 15.7 \\
              & monotonic NN & 63.1 & 63.1 & 100 & 100             & ---          & ---          & --- & --- \\
              & feat-aug     & 77.4 & 77.4 & 100 & 100             & \textbf{1509} & \textbf{678} & 9.6 & 16.8 \\
              & adapter      & 76.3 & 76.4 & 100 & 100             & 84.4         & 84.4         & 8.4 & 17.0 \\
\midrule
LPMC          & MNL          & 69.9 & 69.9 & 100 & 100             & 1.8/16.6     & 1.8/16.6              & 2.2 & 2.2 \\
              & raw FM       & 74.2 & 74.4 & --- & \textbf{28.9}   & ---          & \textbf{4.1/-14.5}    & --- & 0.9 \\
              & monotonic NN & \textbf{52.6} & \textbf{52.6} & 100 & 100         & ---          & ---                   & --- & --- \\
              & feat-aug     & 74.0 & 74.3 & 100 & 100             & 10.3/6.7     & 7.8/4.9               & 0.7 & 0.7 \\
              & adapter      & 72.8 & 72.9$^{\dagger}$ & 100 & 100  & 1.8/16.6     & 1.8/16.6              & 1.3 & 1.3 \\
\midrule
IoT-Wearables & MNL          & 62.5 & 62.5 & 100             & 100             & 26.1/10.0           & 26.1/10.0                & 3.4 & 3.4 \\
              & raw FM       & 67.1 & 67.0 & \textbf{45.6}   & \textbf{40.8}   & \textbf{0.0/-0.1}   & \textbf{-10.5/-8.8}      & 4.2 & 4.3 \\
              & monotonic NN & 64.1 & 64.1 & 100             & 100             & ---                 & ---                      & --- & --- \\
              & feat-aug     & 67.3 & 67.4 & 100             & 100             & 14.9/5.4            & 14.2/5.3                 & 3.2 & 4.1 \\
              & adapter      & 65.7 & 66.5 & 100             & 100             & 26.1/10.0           & 26.1/10.0                & 2.7 & 3.6 \\
\bottomrule
\end{tabular}%
}
\end{table*}

Table~\ref{tab:headline} reports per-cell test accuracy, monotonicity
rate, trade-off ratio, availability leak, and post-calibration
expected calibration error (ECE) for the five models of
\S\ref{sec:setup-experiments}. The results break into three
threads, mirroring the subsection structure below:
\begin{itemize}[leftmargin=2em,itemsep=2pt,topsep=2pt,parsep=0pt]
\item \S\ref{sec:results-fm-fails}: the raw foundation models fail
  behavioral validity on three diagnostics (monotonicity, trade-off
  ratios, availability).
\item \S\ref{sec:results-adapter}: the adapter recovers behavioral
  validity while keeping most of the accuracy gain. Calibration is
  competitive with one documented exception.
\item \S\ref{sec:results-counterfactual}: under counterfactual cost
  perturbations, the raw foundation models violate aggregate
  monotonicity in $6$ of $16$ scenarios while the adapter never
  does.
\end{itemize}

\subsection{The foundation models fail behavioral validity}
\label{sec:results-fm-fails}

Mitra and TabPFN gain $+4$ to $+14$ pp of accuracy over the
multinomial logit across our three datasets, but the gains come
with three distinct failures of behavioral validity. \textbf{Monotonicity}
rates collapse on the datasets with the most predictive headroom:
TabPFN is monotone in cost on only $28.9\%$ of LPMC test rows and
$40.8\%$ of IoT-Wearables; Mitra performs better on LPMC ($50.8\%$)
but only marginally on IoT-Wearables ($45.6\%$). On Swissmetro both
foundation models score $\geq 90\%$ but still below the multinomial
logit's mathematical $100\%$. \textbf{Trade-off ratios} compound the
issue: TabPFN's LPMC driving value-of-time is $-14.5$ GBP/hr (wrong
sign relative to MNL's $+16.6$), and TabPFN's IoT-Wearables function
WTP is also of the wrong sign relative to MNL. Mitra's
willingness-to-pay on the IoT-Wearables binary indicators is locally
inconclusive at the $1\%$ perturbation scale; we discuss the
audit-methodology limitation in Section~\ref{sec:discussion}.
\textbf{Availability compliance} fails on Swissmetro: TabPFN assigns
$\sim 5 \times 10^{-4}$ and Mitra $\sim 2 \times 10^{-3}$ of total
probability to formally unavailable alternatives, against the
multinomial logit's mechanically zero leak. Masking the foundation
model (zeroing unavailable alternatives and renormalizing) removes the
leak but leaves accuracy unchanged, so the leak is a structural error
rather than an artifact of probability redistribution.

\subsection{The adapter recovers behavioral validity while keeping most of the accuracy gain}
\label{sec:results-adapter}

The adapter inherits the multinomial logit's structural utility
(with a sign-constrained cost coefficient) and availability-mask
machinery, so it satisfies $100\%$ monotonicity, exact analytical
trade-off ratios, and zero availability leak by construction on
every cell. The empirical question is how much of the
foundation-model accuracy gain the adapter retains.

The simplified adapter recovers most of it. Across all six
(dataset, foundation-model) cells the adapter trails the raw
foundation model by at most $2$ pp on accuracy
(Table~\ref{tab:headline}): the adapter pays up to $2$ pp for full
preservation of the multinomial logit's economic guarantees. The
gain over the structural multinomial logit alone is positive in
$10$ of $10$ bootstrap replicates on every cell, giving an exact
two-sided binomial sign-test $p$-value of ${\approx}\,0.002$ per
cell, and per-seed McNemar paired-observation tests \citep{mcnemar_1947}
(which compare per-example correctness between two classifiers on
the same test set) yielding $p$-values below $10^{-20}$ on the
larger datasets
(LPMC, IoT-Wearables) and below $10^{-10}$ on the smaller datasets. The trade-off ratios match the structural multinomial logit
by construction (Proposition~\ref{prop:mrs-preservation}) and are
stable across seeds: $84.4 \pm 2.6$ CHF/hr on Swissmetro,
$1.75 \pm 0.05$ and $16.64 \pm 0.33$ GBP/hr on LPMC public transport
and driving, and $+26.10 \pm 0.91$ USD willingness-to-pay for
IoT-Wearables functional features.

The feature-augmented multinomial logit reaches slightly higher
accuracy than the adapter ($+0.5$ to $+2.6$ pp depending on the cell)
but at the cost of degraded trade-off-ratio estimates that drift
from the structural multinomial logit's. On Swissmetro the drift is
egregious: feat-aug VOT is $1508 \pm 738$ CHF/hr for the
Mitra-augmented variant and $678 \pm 319$ for the TabPFN-augmented
variant, an order of magnitude beyond any plausible
willingness-to-pay range. On LPMC and IoT-Wearables the drift is
more modest but still nontrivial: LPMC feat-aug-Mitra reports
$10.3$/$6.7$ GBP/hr against MNL's $1.8$/$16.6$ ($5{\times}$ drift on
public transport, $0.4{\times}$ on drive); IoT-Wearables feat-aug
reports $14$--$15$ / $5$ USD against MNL's $26.1$/$10.0$.
The adapter, by contrast, inherits MNL's trade-off ratios exactly
under the fixed-$\mathbf{q}$ protocol
(Proposition~\ref{prop:mrs-preservation}), so this drift is
bounded to zero by construction. The convex-ensemble baseline,
$\alpha\, P_{\text{MNL}} + (1 - \alpha)\, P_{\text{FM}}$, fits an
$\alpha$ near zero on every cell where the foundation model has higher
validation accuracy, reducing to the raw foundation model up to
numerical noise. Both baselines confirm that linear blending does not
recover the adapter's combination of accuracy and interpretability.

\paragraph{Calibration.}
Post-temperature-scaling ECE is competitive with the structural
multinomial logit on LPMC and IoT-Wearables (Table~\ref{tab:headline});
Swissmetro is the exception, where adapter ECE stays at $8.4\%$
(Mitra) and $17.0\%$ (TabPFN) against the multinomial logit's
$4.0\%$ even after scalar /
vector temperature scaling, isotonic regression, and bootstrap
ensembling, a bias-driven rather than variance-driven limitation we
discuss in Section~\ref{sec:discussion}. Full NLL, Brier, and
uncalibrated ECE in Appendix~\ref{sec:cal-metrics} mirror this
pattern.

\subsection{Counterfactual aggregate-share evaluation}
\label{sec:results-counterfactual}

To test how each model would predict aggregate demand response to a
small price increase, we evaluate every model on the held-out test
set under a $+10\%$ cost perturbation applied separately to each
alternative, then aggregate predicted probabilities into a market
share for the perturbed alternative.
Table~\ref{tab:counterfactual} reports the full result matrix.

\begin{table}[h]
  \centering
  \footnotesize
  \setlength{\tabcolsep}{6pt}
  \renewcommand{\arraystretch}{0.95}
  \caption{Counterfactual aggregate-share change (pp) under $+10\%$
  cost perturbation. Adapter: mean across $10$ replicates
  (std $<0.15$ pp). \textbf{Bold} marks monotonicity violations
  ($\Delta$share $>0$ under a cost \emph{increase}).}
  \label{tab:counterfactual}
  \begin{tabular}{llcccccc}
    \toprule
                  &           & Stage~1 & \multicolumn{2}{c}{Adapter} & \multicolumn{2}{c}{raw FM} \\
    \cmidrule(lr){3-3} \cmidrule(lr){4-5} \cmidrule(lr){6-7}
    Dataset       & Scenario  & MNL & Mitra & TabPFN & Mitra & TabPFN \\
    \midrule
    Swissmetro    & train     & $-0.43$ & $-0.11$ & $-0.07$ & $-1.73$ & $-1.80$ \\
                  & SM        & $-1.59$ & $-0.67$ & $-0.24$ & $-6.06$ & $-3.87$ \\
                  & car       & $-1.10$ & $-0.47$ & $-0.16$ & $-1.91$ & $-1.51$ \\
    LPMC          & PT        & $-0.43$ & $-0.31$ & $-0.31$ & $\mathbf{+0.09}$ & $-0.05$ \\
                  & drive     & $-0.38$ & $-0.25$ & $-0.24$ & $\mathbf{+0.51}$ & $\mathbf{+0.41}$ \\
    IoT-Wearables & alt1      & $-8.30$ & $-7.47$ & $-7.46$ & $\mathbf{+0.37}$ & $\mathbf{+3.24}$ \\
                  & alt2      & $-9.06$ & $-8.82$ & $-8.78$ & $-1.49$ & $-0.98$ \\
                  & alt3      & $-5.92$ & $-5.39$ & $-5.42$ & $\mathbf{+0.72}$ & $-0.71$ \\
    \bottomrule
  \end{tabular}
\end{table}

Across $16$ (dataset, foundation-model, alternative) scenarios the
structural multinomial logit and the adapter agree on the direction
of the share change in every scenario: both predict a strictly
negative change in aggregate share for the perturbed alternative.
Adapter share-changes recover $40$--$97\%$ of the multinomial logit's
magnitude depending on cell (Table~\ref{tab:counterfactual}); the gap
reflects softmax saturation under the foundation-model correction,
not a violation of Proposition~\ref{prop:mrs-preservation}.

The raw foundation-model row uses the recomputed-$\mathbf{q}$
protocol (re-running the foundation model on perturbed inputs) and
shows the per-observation monotonicity violations of
Section~\ref{sec:results-fm-fails} propagating to policy-relevant
aggregates: in $6$ of $16$ scenarios the raw foundation model
predicts a positive aggregate-share change under a cost
\emph{increase} (bolded in Table~\ref{tab:counterfactual}). The
largest violation (TabPFN-IoT-alt1) reverses sign relative to both
MNL and the adapter, an $11$-pp disagreement on consumer response to
a $10\%$ price increase. The structural multinomial logit and the
adapter are exempt by construction.

\section{Ablations}
\label{sec:ablations}

We run five ablations using the same multi-seed bootstrap protocol
as Section~\ref{sec:results}.

\begin{table}[h]
\centering
\scriptsize
\setlength{\tabcolsep}{4pt}
\renewcommand{\arraystretch}{0.9}
\caption{\textbf{Ablation summary}: test accuracy (\%), mean
$\pm$ std across $10$ bootstrap replicates. A4 (graceful degradation
under foundation-model context restriction) has a different shape
(one row per context fraction) and is plotted separately in
Figure~\ref{fig:graceful_degradation}.}
\label{tab:ablations}
\resizebox{\textwidth}{!}{%
\begin{tabular}{lcccccc}
\toprule
 & \multicolumn{2}{c}{Swissmetro} & \multicolumn{2}{c}{LPMC} & \multicolumn{2}{c}{IoT-Wearables} \\
\cmidrule(lr){2-3} \cmidrule(lr){4-5} \cmidrule(lr){6-7}
Variant & Mitra & TabPFN & Mitra & TabPFN & Mitra & TabPFN \\
\midrule
A1: $V_{\text{struct}} + \alpha \log q$ only & 76.42 $\pm$ 0.10 & 77.10 $\pm$ 0.05 & 71.89 $\pm$ 0.06 & 72.06 $\pm$ 0.04 & 63.25 $\pm$ 1.01 & 63.42 $\pm$ 0.93 \\
A2: adapter (paper's primary) & 76.44 $\pm$ 0.25 & 76.67 $\pm$ 0.27 & 72.94 $\pm$ 0.10 & 72.86 $\pm$ 0.11 & 66.43 $\pm$ 0.61 & 66.34 $\pm$ 0.54 \\
A3: joint training & 76.97 $\pm$ 0.23 & 77.00 $\pm$ 0.27 & 73.99 $\pm$ 0.07 & 74.19 $\pm$ 0.09 & 66.66 $\pm$ 0.54 & 66.61 $\pm$ 0.49 \\
A5: capacity sweep (range) & 76.45--76.49 & 76.67--76.73 & 72.89--72.90 & 72.78--72.88 & 65.63--66.65 & 66.16--66.47 \\
\bottomrule
\end{tabular}%
}
\end{table}

\begin{description}[leftmargin=1.2em,labelindent=0pt,itemsep=2pt,topsep=2pt,parsep=0pt]
\item[A1: log-only.] $V_k = V^{\text{struct}}_k + \alpha \log q_k$
  replaces the neural correction with the scalar log-probability term
  alone. Competitive on the transportation datasets but trails by
  $1$--$3$ pp accuracy gain on the new datasets, where the foundation
  model's predictions encode patterns a single scalar projection cannot
  capture.

\item[A2: simplified architecture (paper's primary).]
  $V_k = V^{\text{struct}}_k + g_k(\mathbf{q})$ drops the
  $\alpha \log q_k$ term. Bit-close numbers vs the two-term variant
  across all six cells; the simplification comes at no empirical cost.

\item[A3: joint training.] Training $\boldsymbol{\beta}$ and $g$ jointly
  under the same likelihood gives accuracy comparable to or slightly
  higher than the two-stage adapter, but the structural cost coefficient
  collapses on the transportation datasets by a factor of
  $3$--$17 \times$ as $g(\mathbf{q})$ absorbs the cost-induced variation.
  The collapse does not reproduce on IoT-Wearables, where the joint
  $\beta_{\text{cost}}$ is $\sim 25\%$ larger; we attribute this to
  whether cost is smoothly recoverable from $\mathbf{q}$
  (Appendix~\ref{sec:prop2-partial}).

\item[A4: degraded foundation model.] We retrained Mitra and TabPFN
  on $50\%$, $25\%$, and $10\%$ of the Swissmetro train+val context,
  using the same $k=5$ cross-fitted training-$\mathbf{q}_i$ protocol
  as the headline cells. Accuracy gain decreases roughly linearly
  with context fraction, with $10/10$ bootstrap replicates positive
  at every level (Figure~\ref{fig:graceful_degradation}). The adapter
  falls back toward Stage~1 but does not collapse below it.

\item[A5: capacity sweep.] Varying the correction network's hidden width
  ($16$, $32$, $64$) and depth ($1$, $2$ layers) keeps accuracy within
  $1$ pp of A2's across all cells. The gain comes from the
  composition of structural utility plus foundation-model correction,
  not from correction-network capacity.
\end{description}

Table~\ref{tab:ablations} summarizes test accuracy across all five
variants. Four of the five preserve $100\%$ monotonicity by
construction; A3 is the empirical illustration of
Proposition~\ref{prop:joint-collapse}.

\begin{figure}[h]
  \centering
  \includegraphics[width=\textwidth]{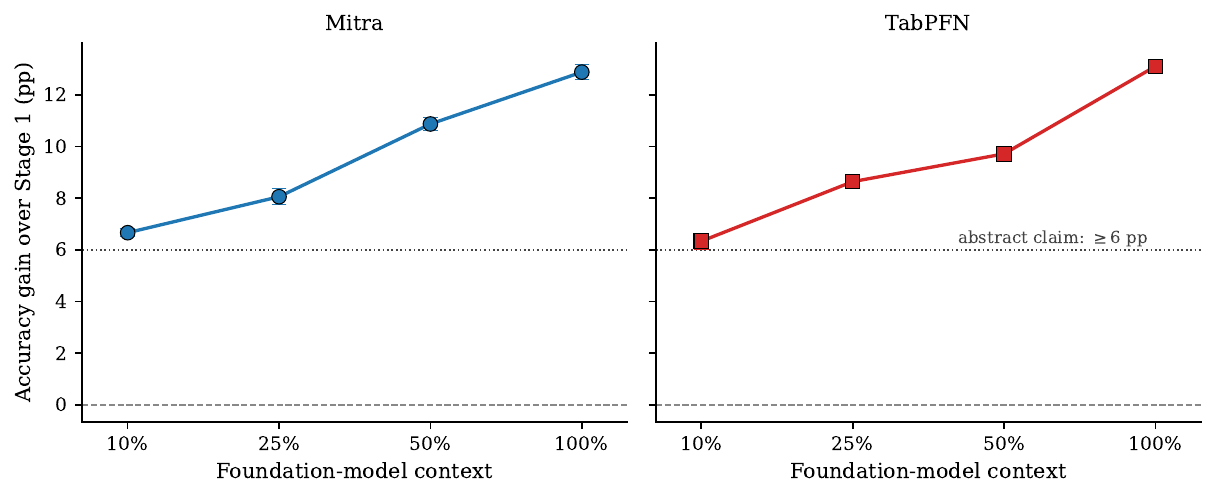}
  \caption{\textbf{Graceful degradation (A4)} on Swissmetro:
  adapter accuracy gain over Stage~1 (pp) as the foundation-model
  context fraction is reduced. One panel per foundation model;
  the dotted line marks the abstract's $\geq 6$ pp claim. Markers
  are means across $10$ bootstrap replicates with $95\%$ CI
  (cross-fitted protocol).}
  \label{fig:graceful_degradation}
\end{figure}

\section{Discussion}
\label{sec:discussion}

\begin{description}[leftmargin=1.2em,labelindent=0pt,itemsep=2pt,topsep=2pt,parsep=0pt]
\item[Aggregate monotonicity violations on LPMC and IoT-Wearables.]
  The per-row failures of Section~\ref{sec:results-fm-fails}
  propagate to aggregate share predictions: raw foundation models
  predict a $+10\%$ cost increase \emph{raises} aggregate share in
  $6$ of $16$ scenarios. The structural multinomial logit and
  adapter are exempt by Proposition~\ref{prop:mrs-preservation}.

\item[Calibration on Swissmetro.] The adapter's post-calibration
  ECE stays elevated on Swissmetro ($8.4\%$
  for Mitra, $17.0\%$ for TabPFN) against the multinomial logit's
  $4.0\%$. Scalar / vector temperature scaling, isotonic regression,
  and 10-seed bootstrap ensembling all fail to recover
  multinomial-logit-comparable calibration. The residual error is
  bias-driven (per-seed adapter distributions miscalibrated in the
  same direction), so ensemble averaging does not help. Calibration
  on the other two datasets is competitive with the multinomial
  logit; the Swissmetro limitation is dataset-specific.

\item[Audit methodology on discrete attributes.] The audit's
  $1\%$-perturbation recipe is sound for continuous attributes but
  produces float32-rounding-precision deltas on IoT-Wearables' binary
  indicators (function and label) in Mitra cells. We use a
  discrete-flip protocol instead (set the indicator to its
  complement, sign-correct the per-row delta), giving finite
  trade-off values: function flip-WTPs span $-0.05$ to $+0.59$ USD;
  label flips span $-0.13$ to $+0.23$ USD, with both signs appearing
  within a single attribute, indicating that Mitra's per-attribute
  response is weak relative to its per-cost response.

\item[Proposition~\ref{prop:joint-collapse} is dataset-dependent.]
  The joint-training cost-coefficient collapse (ablation A3,
  \S\ref{sec:ablations}) reproduces on Swissmetro and LPMC and does
  not on IoT-Wearables. The proposition is unaffected: identifiability
  fails when cost is recoverable from $\mathbf{q}$ as a continuous
  map, and the empirical strength of the failure tracks how well
  that condition is met. On IoT-Wearables the foundation-model
  probability vector does not closely track per-product price.

\item[Adapter inherits Stage~1's MNL specification.] The adapter's
  trade-off ratios are inherited from $\boldsymbol{\beta}^\ast$ by
  construction (Proposition~\ref{prop:mrs-preservation}). If the
  Stage~1 multinomial logit is misspecified for the deployment
  population, $g$ cannot correct the implied trade-off ratio because
  $g$'s contribution to the structural derivative is zero under the
  fixed-$\mathbf{q}$ protocol. Practitioners deploying the adapter
  to new domains should treat the analytical trade-off ratio as
  conditional on the Stage~1 specification.
\end{description}

\section{Conclusion}
\label{sec:conclusion}

Discrete-choice models guide policy decisions whose economic stakes
are large and whose failure modes are not subtle. Tabular foundation
models raise predictive accuracy beyond what structural utility models
alone reach, but their predictions can violate basic economic logic:
monotonicity in cost, sensible willingness-to-pay, zero probability on
unavailable alternatives. We propose a two-stage adapter that embeds
foundation-model predictions inside a structurally constrained utility
model. Proposition~\ref{prop:mrs-preservation} shows that the
two-stage procedure preserves the structural model's marginal rate of
substitution exactly; Proposition~\ref{prop:joint-collapse} shows that
joint training does not. Across three discrete-choice datasets and two
foundation models, the adapter pays at most $2$ pp of accuracy
for full structural validity by construction, with the
gain over the structural multinomial logit positive in $10$ of $10$
bootstrap replicates on every cell. The architecture is
model-agnostic and slots in any predict-function whose probability
output respects the precomputed-$\mathbf{q}$ contract.

\newpage
\bibliographystyle{plainnat}
\bibliography{references}

\newpage
\appendix
\section{Full proofs of the propositions}
\label{sec:full-proofs}

This appendix gives the full proofs of both propositions; the main
text gave shorter sketches in Section~\ref{sec:method}. Notation is
the same as the main text: $x_i$ is observation $i$'s feature
vector, $K$ is the number of alternatives, $\phi_k$ pulls out
the attributes that enter alternative $k$'s structural utility,
$\beta$ holds the structural coefficients, $\mathbf{q}$ is the
foundation model's predicted probability vector, and $g$ is the
correction network. Choice probabilities are the usual softmax
$P_k(x) = \exp(V_k(x)) / \sum_j \exp(V_j(x))$.

\subsection{Proof of Proposition~\ref{prop:mrs-preservation}}
\label{sec:proof-prop1}

\begin{proposition*}[Restated]
Under the fixed-$\mathbf{q}$ training protocol of
Section~\ref{sec:method}, where $\mathbf{q}(x_i)$ is computed once on
the unperturbed input and held fixed across optimization and
counterfactual evaluation, let $\beta^\ast$ be the Stage~1
maximum-likelihood estimate of the structural coefficients and let
$g$ be any Stage~2 parameters satisfying the precomputed-$\mathbf{q}$
contract. For any two attributes $j, j'$ that enter the model only
through $V^{\text{struct}}_k(x_i) = \beta^\top \phi_k(x_i)$ in the
identity-on-the-attribute form, and for any observation $x_i$ at
which $\phi_k$ is differentiable in $x_{ij}$ and $x_{ij'}$,
\begin{equation*}
  \mathrm{MRS}_{j, j'}(x_i)
  \;\equiv\;
  \frac{\partial V_k(x_i) / \partial x_{ij}}{\partial V_k(x_i) / \partial x_{ij'}}
  \;=\;
  \frac{\beta^\ast_j}{\beta^\ast_{j'}}.
\end{equation*}
\end{proposition*}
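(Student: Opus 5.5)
The argument is a direct differentiation of the adapter utility \eqref{eq:adapter} at the Stage~2 configuration $(\boldsymbol{\beta}^\ast, g)$, keeping track of which terms depend on $x_{ij}$ and $x_{ij'}$. First, make the fixed-$\mathbf{q}$ contract precise. Under the protocol, $\mathbf{q}_i = \mathbf{q}(\mathbf{x}_i)$ is computed once on the unperturbed input and enters $V_k$ as a constant vector. So for any perturbation $\mathbf{x}_i + t\,\mathbf{e}_j$ we have
\begin{equation*}
V_k(\mathbf{x}_i + t\,\mathbf{e}_j) = \boldsymbol{\beta}^{\ast\top}\boldsymbol{\phi}_k(\mathbf{x}_i + t\,\mathbf{e}_j) + g_k(\mathbf{q}_i).
\end{equation*}
The second term is independent of $t$. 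This holds for every $g$, since $g$ is evaluated at a frozen argument; no property of the MLP (smoothness, width, initialization) is used.

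Next, differentiate in $t$ at $t=0$. By the assumed differentiability of $\boldsymbol{\phi}_k$ in $x_{ij}$, we get
\begin{equation*}
\partial V_k/\partial x_{ij} = \sum_m \beta^\ast_m\, \partial \phi_{k,m}/\partial x_{ij}.
\end{equation*}
Now apply the hypothesis that attribute $j$ enters only through $V^{\text{struct}}_k$ in identity-on-the-attribute form. This means exactly one component of $\boldsymbol{\phi}_k$, indexed so that its coefficient is $\beta_j$, equals $x_{ij}$, and no other component depends on $x_{ij}$. The sum therefore collapses to $\beta^\ast_j$, and likewise the derivative with respect to $x_{ij'}$ is $\beta^\ast_{j'}$. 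Taking the ratio gives $\beta^\ast_j/\beta^\ast_{j'}$. The ratio is well defined provided $\beta^\ast_{j'} \neq 0$, which holds automatically when $j'$ is cost because of the reparameterization $\beta = -\exp(\theta)$; in general I would state this as a standing nondegeneracy assumption. Since Stage~2 freezes $\boldsymbol{\beta} = \boldsymbol{\beta}^\ast$ and Stage~1 does not involve $\mathbf{q}$, the ratio is invariant to the choice of foundation model and of $g$. This gives the value-of-time statement.

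For the recomputed-$\mathbf{q}$ remark, the chain rule yields the extra term $\nabla g_k(\mathbf{q})^\top \partial\mathbf{q}/\partial x_{ij}$. A one-line example, such as $K=2$ with $g_k$ linear and $\partial \mathbf{q}/\partial x_{ij}$ not proportional to $\partial \mathbf{q}/\partial x_{ij'}$, then shows the ratio generally differs from $\beta^\ast_j/\beta^\ast_{j'}$.

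For the corollary, use $\partial P_k/\partial V_k = P_k(1-P_k)$. The attributes affect only $V_k$, so $\partial P_k/\partial x_{ij} = P_k(1-P_k)\beta^\ast_j$. This common positive factor cancels in the ratio, and the audit's sign convention then gives $\rho^{\mathrm{VOT}}$ or $\rho^{\mathrm{WTP}}$.

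The main obstacle is not the calculus but pinning down the hypothesis. Interactions such as cost$\times$income, or attributes appearing in several alternatives' utilities, would break the collapse to a single coefficient. I would handle this by stating the identity-on-the-attribute condition explicitly. In the interaction case the same computation yields the structural MRS of the Stage~1 MNL evaluated at $\mathbf{x}_i$, which is still independent of $g$. I would also note that the finite-difference audit estimator agrees with the derivative exactly in the linear case for the $V$-ratio, and up to $O(h)$ for the probability ratio.
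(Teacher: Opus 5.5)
Your proof is correct and follows the paper's argument. Substituting the frozen $\mathbf{q}_i$ so that $g_k(\mathbf{q}_i)$ is constant in $x_{ij}$ is the same step as the paper's chain-rule expansion with $\partial \mathbf{q}_i/\partial x_{ij} = 0$; the identity-on-the-attribute form then collapses each partial to $\beta^\ast_j$, and your explicit $\beta^\ast_{j'} \neq 0$ caveat is a harmless tightening the paper leaves implicit. One aside outside the stated claim: your $K=2$ recomputed-$\mathbf{q}$ example is misphrased, because on $\Delta^1$ every $\partial \mathbf{q}/\partial x_{ij}$ is a multiple of $(1,-1)$ and so is always parallel to $\partial \mathbf{q}/\partial x_{ij'}$; the condition you actually need is $\partial g_k/\partial q_1 - \partial g_k/\partial q_2 \neq 0$ together with $(\partial q_1/\partial x_{ij}, \partial q_1/\partial x_{ij'})$ not proportional to $(\beta^\ast_j, \beta^\ast_{j'})$.
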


\begin{proof}
The utility is $V_k(x_i) = \beta^{\ast\top} \phi_k(x_i) +
g_k(\mathbf{q}(x_i))$. Differentiating in $x_{ij}$:
\begin{equation}
  \frac{\partial V_k(x_i)}{\partial x_{ij}}
  =
  \beta^{\ast\top} \frac{\partial \phi_k(x_i)}{\partial x_{ij}}
  + \nabla_{\mathbf{q}} g_k\!\left(\mathbf{q}(x_i)\right)^{\!\top}
    \frac{\partial \mathbf{q}(x_i)}{\partial x_{ij}}.
  \label{eq:full-derivative}
\end{equation}
The fixed-$\mathbf{q}$ protocol stores $\mathbf{q}_i := \mathbf{q}(x_i)$
once, computed on the unperturbed input, and never re-differentiates
through it. So $\partial \mathbf{q}_i / \partial x_{ij} = 0$ by
definition, and the second term in \eqref{eq:full-derivative}
vanishes regardless of what $\mathbf{q}_i$ or $g$ happen to be:
\begin{equation}
  \frac{\partial V_k(x_i)}{\partial x_{ij}}
  =
  \beta^{\ast\top} \frac{\partial \phi_k(x_i)}{\partial x_{ij}}.
  \label{eq:reduced-derivative}
\end{equation}
For attributes that enter $\phi_k$ in identity-on-the-attribute form,
$\partial \phi_k(x_i) / \partial x_{ij} = e_j \cdot \mathbb{1}[j \in S_k]$,
where $S_k$ is the set of indices $\phi_k$ depends on. Substituting
into \eqref{eq:reduced-derivative} gives
$\partial V_k / \partial x_{ij} = \beta^\ast_j$ when $j \in S_k$ and
zero otherwise. Since $j, j'$ both enter through $\phi_k$, the ratio
collapses to $\beta^\ast_j / \beta^\ast_{j'}$, independent of $x_i$
and $g$.
\end{proof}

\paragraph{Remark on the protocol.} The proof leans on
$\partial \mathbf{q} / \partial x = 0$, which is a property of the
protocol rather than the architecture. Under the
recomputed-$\mathbf{q}$ protocol, $g$ would contribute a chain-rule
term $\nabla_{\mathbf{q}} g_k^\top \cdot \partial \mathbf{q} /
\partial x_{ij}$ that has no sign or magnitude guarantee, and the
trade-off ratio is no longer tied to $\beta^\ast_j / \beta^\ast_{j'}$.
We use fixed-$\mathbf{q}$ because it matches the audit's intent: the
analyst wants the structural part of the utility to respond to a
price change while the foundation model's per-chooser assessment
stays put, rather than re-running the foundation model on a
counterfactual feature value it never saw at pretraining time.

\subsection{Proof of Proposition~\ref{prop:joint-collapse}}
\label{sec:proof-prop2}

\begin{proposition*}[Restated]
Suppose:
\begin{enumerate}[label=(\roman*),leftmargin=2em,topsep=2pt,itemsep=2pt]
  \item the foundation model
    $\mathbf{q}: \mathcal{X} \to \Delta^{K-1}$ is continuously
    differentiable on the support of the training distribution, with
    non-vanishing partial derivative with respect to a designated
    cost feature on a set of positive measure;
  \item the correction $g$ is drawn from a function class
    $\mathcal{G} \subseteq C^0(\Delta^{K-1}, \mathbb{R}^K)$ that is
    dense in the continuous functions on the image of $\mathbf{q}$
    in the supremum norm;
  \item the joint negative log-likelihood
    $L(\beta, g) = -\frac{1}{N}\sum_i \log P_{y_i}(x_i; \beta, g)$ is
    minimized over $(\beta, g)$ jointly, with no two-stage constraint
    and no regularization on $\|\beta\|$ or $\|g\|$.
\end{enumerate}
Then for any structural parameter vector $\beta^{(0)}$ achieving
joint loss $L^\ast$, there exists a one-parameter family
$\{(\beta^{(c)}, g^{(c)}) : c \in \mathbb{R}\}$ with $\beta^{(c)}$
distinct in their cost coordinate, all achieving joint loss
$L^\ast$ in the limit as the approximation $g^{(c)} \in \mathcal{G}$
is refined.
\end{proposition*}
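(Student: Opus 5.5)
The plan is to build the family explicitly by moving cost-driven utility from the structural term into the correction, so that every utility is left unchanged on the training rows.

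\emph{Recoverability.} Fix a joint minimizer $(\beta^{(0)}, g^{(0)})$ with loss $L^\ast$. I will rely on the cost-recoverability assumption of Proposition~\ref{prop:joint-collapse} in the main text. It gives, for each alternative $k$, a continuous $\kappa_k : \Delta^{K-1} \to \mathbb{R}$ with $\mathrm{cost}_k(x) = \kappa_k(\mathbf{q}(x))$ on the closure of the training support. Hypothesis (i) alone does not imply this, because $\mathbf{q}$ also depends on non-cost features. What (i) does supply is the nondegeneracy behind the assumption: $\mathbf{q}$ genuinely moves with cost, so $\kappa_k$ is not forced to be constant. I will state explicitly that the argument uses the main-text assumption.

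\emph{Target correction.} For $c \in \mathbb{R}$ define $\beta^{(c)}$ by changing only the cost coordinate, $\beta^{(c)}_{\text{cost}} = \beta^{(0)}_{\text{cost}} + c$. Define the target $h^{(c)}_k = g^{(0)}_k - c\,\kappa_k$. The structural increase $c\,\mathrm{cost}_k(x_i)$ then cancels against $-c\,\kappa_k(\mathbf{q}(x_i))$. Hence $\beta^{(c)\top}\phi_k(x_i) + h^{(c)}_k(\mathbf{q}(x_i)) = V^{(0)}_k(x_i)$ exactly for every training row and every $k$. The probabilities therefore coincide, including under the availability-masked softmax, and the loss of $(\beta^{(c)}, h^{(c)})$ is exactly $L^\ast$. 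Distinctness in the cost coordinate is immediate.

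\emph{Approximation within $\mathcal{G}$.} The target $h^{(c)}$ is continuous but need not lie in $\mathcal{G}$. The image of $\mathbf{q}$ on the closure of the training support sits inside the compact simplex. By hypothesis (ii) I can choose $g^{(c,m)} \in \mathcal{G}$ with $\sup \|g^{(c,m)} - h^{(c)}\|_\infty \le 1/m$ on that image. The log-softmax is $1$-Lipschitz in each utility coordinate with respect to the sup norm, and the sample is finite. Each term of $L$ therefore moves by at most $2/m$, so $L(\beta^{(c)}, g^{(c,m)}) \to L^\ast$. This is exactly the ``in the limit as the approximation is refined'' clause. When $\mathcal{G}$ contains $h^{(c)}$ the equality is exact, as in the main-text version.

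\emph{Sign constraints and the final claim.} If the constraint $\beta_{\text{cost}} < 0$ is imposed, restrict $c$ to the open interval $(-\infty, -\beta^{(0)}_{\text{cost}})$. This set is nonempty and open, which gives the interval $\mathcal{C}$ of the main statement. The restated version's $c \in \mathbb{R}$ holds without the reparameterization. Because the loss is flat along $c$ and its gradient vanishes along this direction, gradient descent has no force selecting $c$. Its endpoint is therefore set by initialization, not by $\beta^\ast$.

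The main obstacle is the first step. I must reconcile hypothesis (i) with the need for a continuous $\kappa_k$ defined on the whole image closure, since uniform approximation requires a compact domain on which $h^{(c)}$ is continuous. I will settle this by invoking the main-text assumption directly and treating (i) as its nondegeneracy companion, not as a substitute for it.
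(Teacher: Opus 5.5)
Your construction is the paper's. You move only the cost coordinate and absorb $c\,\kappa_k(\mathbf{q})$ into the correction, so every training utility is unchanged; your $+c$ against the paper's $-c$ is immaterial. You then use density of $\mathcal{G}$ and the sup-norm Lipschitz property of the log-softmax to recover $L^\ast$ in the limit, and your $2/m$ per-row bound is exactly right.

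The one real divergence is the source of $\kappa_k$. The paper obtains it from hypothesis~(i) by the implicit function theorem. It then pastes local inverses, claiming they agree on overlaps ``because $\mathrm{cost}_k$ is single-valued.'' You decline to do this and import the main-text cost-recoverability assumption instead, and you are right to. The paper's step fails for three reasons:
\begin{itemize}
\item A single nonvanishing partial $\partial\mathbf{q}/\partial\,\mathrm{cost}$ lets the implicit function theorem express cost locally through one coordinate of $\mathbf{q}$ \emph{together with the remaining coordinates of $x$}, not through $\mathbf{q}$ alone.
\item Single-valuedness of $\mathrm{cost}_k$ in $x$ says nothing about single-valuedness in $\mathbf{q}$. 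Distinct inputs with different costs can share a $\mathbf{q}$ value, for example when $\mathbf{q}$ depends on $\mathrm{cost}_k+\mathrm{time}_k$.
\item Hypothesis~(i) holds only on a positive-measure subset, so even a valid local $\kappa_k$ would leave the remaining training rows uncompensated.
\end{itemize}
Once recoverability is assumed, (i) plays no logical role in your argument: even a constant $\kappa_k$ would do.

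The restated conclusion is in fact false under (i)--(iii) alone, so no argument can close the gap you identified. Here is a counterexample.
\begin{itemize}
\item Take $K=2$, $V_1=\beta_{\mathrm{cost}}\,\mathrm{cost}_1+g_1(\mathbf{q})$ and $V_2=g_2(\mathbf{q})$, with $\mathcal{G}=C^0(\Delta^{1},\mathbb{R}^2)$.
\item Let $\mathbf{q}$ depend on $x$ only through $(\mathrm{cost}_1-a)^2$, via a smooth strictly monotone map. Its cost partial then vanishes only at $\mathrm{cost}_1=a$, so (i) holds.
\item Put the training rows at $\mathrm{cost}_1=a\pm t$. Give alternative~1 choice frequencies strictly between $0$ and $1$, lower at the higher cost.
\end{itemize}
Every row has the same value of $\mathbf{q}$, so $g$ acts only as a free intercept. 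The joint loss reduces to a two-level logistic regression with intercept, whose likelihood has a unique maximizer $\hat\beta_{\mathrm{cost}}<0$. The profile loss $\inf_g L(\beta,g)$ is strictly larger for any other cost coefficient, so at the minimizer no other $\beta_{\mathrm{cost}}$ attains $L^\ast$, even in the limit.

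So what you prove, the main-text version under cost-recoverability, is the correct form, and you should present it as such. You can also weaken that assumption. Since $L$ averages over finitely many rows, you only need training rows with equal $\mathbf{q}(x_i)$ to have equal $\mathrm{cost}_k$. Any continuous extension of the resulting finite map to $\Delta^{K-1}$ then serves as $\kappa_k$, and the closure and compactness discussion becomes unnecessary.
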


\begin{proof}
The strategy: for any $c$, build a correction $g^{(c)}$ that exactly
cancels the change you'd make to the structural cost coefficient,
leaving predicted probabilities — and the loss — untouched.

Pick a starting minimizer $(\beta^{(0)}, g^{(0)})$. Define a shifted
$\beta^{(c)}$ by $\beta^{(c)}_{\text{cost}} =
\beta^{(0)}_{\text{cost}} - c$ and $\beta^{(c)}_j = \beta^{(0)}_j$
for every other coordinate. We build a matching $g^{(c)}$ so that
choice probabilities are unchanged for every $x$ and every $k$:
$P_k(x; \beta^{(c)}, g^{(c)}) = P_k(x; \beta^{(0)}, g^{(0)})$. Once
that holds, the joint loss is unchanged.

The softmax is invariant to adding the same constant to every
$V_k$, so it's enough to show that
$V_k(x; \beta^{(c)}, g^{(c)}) - V_k(x; \beta^{(0)}, g^{(0)})$ is
the same across $k$ for every $x$.

Plugging in the shift, the structural part of that difference is
\begin{align*}
  V^{\text{struct}}_k(x; \beta^{(0)}) - V^{\text{struct}}_k(x; \beta^{(c)})
  &= (\beta^{(0)}_{\text{cost}} - \beta^{(c)}_{\text{cost}}) \cdot
     \mathrm{cost}_k(x) = c \cdot \mathrm{cost}_k(x).
\end{align*}
So if we add $c \cdot \mathrm{cost}_k(x)$ to $g^{(0)}_k(\mathbf{q}(x))$,
we exactly reproduce the original $V_k$. The catch: $g$ only sees
$\mathbf{q}$, not $x$. So we need $\mathrm{cost}_k(x)$ to be
recoverable as a continuous function of $\mathbf{q}(x)$.

Assumption (i) guarantees this locally. Where $\mathbf{q}$ is
continuously differentiable and its partial in cost is non-zero, the
implicit function theorem gives a continuous local inverse: a
function $\kappa_k$ on a neighborhood with
$\mathrm{cost}_k(x) = \kappa_k(\mathbf{q}(x))$. Local inverses agree
where neighborhoods overlap (because $\mathrm{cost}_k$ is
single-valued), so they paste into a continuous global map
$\kappa_k : \mathrm{Im}(\mathbf{q}) \to \mathbb{R}$ on (at least) the
positive-measure subset where the assumption holds; continuous
extension to the closure preserves continuity on the compact image.

Now let $\tilde{h}_k(\mathbf{q}) = c \cdot \kappa_k(\mathbf{q})$;
this is continuous on $\mathrm{Im}(\mathbf{q})$. Assumption (ii)
says $\mathcal{G}$ is dense in continuous functions on this image
(true for sufficiently wide MLPs by universal
approximation~\citep{cybenko1989}), so for any $\varepsilon > 0$ we
can pick a $g^{(c, \varepsilon)} \in \mathcal{G}$ within $\varepsilon$
of $g^{(0)}_k + \tilde{h}_k$ in the supremum norm.

The softmax is Lipschitz in $V$ under the sup norm, so the per-row
log-likelihood error is bounded by a constant times $\varepsilon$
uniformly across $i$. Hence
$L(\beta^{(c)}, g^{(c, \varepsilon)}) \to L^\ast$ as $\varepsilon
\to 0$, for every $c$. Gradient descent on the joint loss can
therefore land at any value of $\beta^{(c)}_{\text{cost}}$ depending
on initialization — the structural cost coefficient is not
identifiable from the joint loss alone.
\end{proof}

\paragraph{Remark on assumption (i).} If $\mathbf{q}$ doesn't react
to cost on some subset (its cost partial vanishes there), then cost
isn't recoverable from $\mathbf{q}$ on that subset, and the
implicit-function step doesn't extend. You then get a weaker
statement: the joint loss is flat in $\beta_{\text{cost}}$ only along
the parts of the support where cost is recoverable, and the
empirical collapse is stronger on datasets where the foundation
model has internalized cost more thoroughly. This is the partial-
collapse regime discussed in Section~\ref{sec:prop2-partial}.

\paragraph{Remark on regularization.} A small $L_2$ penalty doesn't
fix the problem. Along the family $(\beta^{(c)}, g^{(c)})$, the data
loss is exactly flat in $c$ but the penalty term
$\lambda \|\beta\|^2 + \mu \|g\|^2$ varies. So the regularized
minimum is decided by the penalty's preferred point along the
family, which depends on how $g$ is parameterized and where it
was initialized. Cross-validation doesn't help either: validation
likelihood is also flat in $c$. A large penalty does restore
identifiability, but only by driving everything toward zero — fit
suffers. The two-stage procedure sidesteps the trade-off entirely:
fixing $g \equiv 0$ during Stage~1 is a structural constraint, not a
penalty, and Stage~1 recovers the standard MNL MLE. See
Section~\ref{sec:prop2-regularization} for the longer version.

\section{Proposition~\ref{prop:joint-collapse}: extended discussion}
\label{sec:prop2-extended}

\subsection{Cost-recoverability and partial collapse}
\label{sec:prop2-partial}

Proposition~\ref{prop:joint-collapse}'s implicit-function step
assumes the foundation model is differentiable in cost on a
positive-measure subset of the input space. The proposition's
conclusion is then \emph{global} non-identifiability: an entire
family of $(\beta, g)$ pairs achieves the same loss. In practice,
foundation models aren't uniformly cost-sensitive everywhere — on
some inputs $\mathbf{q}$ barely reacts to cost. Where it doesn't,
the implicit function step doesn't extend.

The weaker statement is what we actually see empirically: the joint
loss is flat in $\beta_{\text{cost}}$ only on the parts of the
support where cost is recoverable from $\mathbf{q}$. On Swissmetro
and LPMC, where the foundation model has clearly picked up
cost-correlated structure, joint training collapses the structural
cost coefficient by 3 to 17$\times$ relative to the two-stage
estimate (Section~\ref{sec:ablations}). On IoT-Wearables, where
per-product prices aren't closely tracked by $\mathbf{q}$, the joint
estimate doesn't collapse and is in fact slightly larger in magnitude
than the two-stage one. The empirical strength of the failure scales
with how thoroughly the foundation model has internalized cost — just
as the weaker statement predicts.

It's natural to ask whether the collapse is specific to a particular
foundation model. Both Mitra and TabPFN show it on Swissmetro and
LPMC, with magnitudes within $2\times$ of each other, so the driver
seems to be the in-context-learning paradigm itself rather than
architectural specifics.

\subsection{Regularization}
\label{sec:prop2-regularization}

Proposition~\ref{prop:joint-collapse} is stated for unregularized
joint minimization. Real training pipelines usually add a small
$L_2$ penalty, so does that fix things? Short answer: not unless the
penalty is so large that it kills the fit.

Small $L_2$ penalty. Along the family $(\beta^{(c)}, g^{(c)})$ from
the proof, the data-fit loss is exactly flat in $c$. The penalty
$\lambda \|\boldsymbol{\beta}\|^2 + \mu \|g\|^2$ varies along the
family, but its preferred $c$ depends on how $g$'s parameter space
is shaped in $L_2$, which is itself initialization-dependent.
Cross-validation doesn't help either: the validation likelihood is
also flat in $c$. So small $L_2$ doesn't restore identifiability —
it just picks a regularizer-preferred point.

Large $L_2$ penalty. Now the minimizer is dominated by the
regularizer: $\boldsymbol{\beta}$ shrinks toward zero, $g$ stays
small, identifiability is technically restored — but the resulting
$\boldsymbol{\beta}$ isn't the MLE either, it's just a shrunk
version. Strong enough regularization to fix identifiability also
ruins the fit.

The two-stage procedure avoids this trade-off. Setting $g \equiv 0$
during Stage~1 is a hard structural constraint, not a soft penalty.
The Stage~1 problem is just the standard MNL likelihood, which has
a unique MLE under the usual regularity conditions. Stage~2 then
fits $g$ on top without disturbing $\boldsymbol{\beta}^*$.

\section{Audit methodology: discrete-attribute trade-off ratios}
\label{sec:audit-discrete}

The behavioral audit perturbs each attribute by $1\%$ of its
observed range and computes a finite-difference derivative. That
works fine for continuous attributes (Swissmetro travel time, LPMC
duration, cost). On IoT-Wearables, the functional-feature and
labeling attributes are binary $\{0, 1\}$ indicators; a $1\%$
perturbation off the grid is too small for float32 to register, and
in the Mitra cells the deltas round to zero.

We swapped in a discrete-flip protocol for these: set the indicator
to its complement and sign-correct the per-row delta by
$1 - 2 X_{i,\text{attr}}$ so that $0 \to 1$ and $1 \to 0$ flips
share the same denominator sign. After the swap, functional-feature
flip-WTPs span $-0.05$ to $+0.59$ USD and label flip-WTPs span
$-0.13$ to $+0.23$ USD, with both signs appearing within a single
attribute — the small magnitude tells us Mitra's per-attribute
response is weak relative to its per-cost response, not that the
audit was broken.

\section{Probabilistic-quality metrics: NLL, Brier, uncalibrated ECE}
\label{sec:cal-metrics}

The headline table reports post-temperature-scaling ECE only.
Table~\ref{tab:cal-metrics} reports the broader picture: test-set
negative log-likelihood (NLL), Brier score (sum-of-squares against
the one-hot target, averaged across rows), and \emph{uncalibrated}
ECE (maximum-confidence binning, $K=15$ equal-width bins, no
post-hoc temperature scaling) for the three primary models. MNL and
adapter cells are mean $\pm$ std across $10$ bootstrap replicates;
the raw FM is deterministic.

Three patterns stand out:
\begin{itemize}[leftmargin=2em,itemsep=2pt,topsep=2pt,parsep=0pt]
\item Raw foundation models win NLL and Brier on every cell. They
  put more probability mass on the correct label than either MNL or
  the adapter, consistent with their accuracy advantage.
\item Adapter ECE is worse than MNL's on Swissmetro but better on
  LPMC and IoT-Wearables. On Swissmetro the adapter's uncalibrated
  ECE sits at 11.1\,\% (Mitra) / 18.6\,\% (TabPFN) against MNL's
  $\sim$10\,\%; on LPMC and IoT-Wearables the adapter is below MNL
  ($3.0$ vs $3.8$ and $3.6$ vs $7.9$ for Mitra). The Swissmetro
  pattern matches the Discussion: the adapter's Swissmetro
  miscalibration is bias-driven, not inherited from $\mathbf{q}$.
\item Raw FMs' low uncalibrated ECE on Swissmetro and LPMC tells us
  the test-row $\mathbf{q}_i$ values are already well-calibrated
  when the foundation model has a clean out-of-context prediction.
  So the adapter's calibration gap on Swissmetro originates in
  Stage~2's fit of $g$, not in $\mathbf{q}$ itself.
\end{itemize}

\begin{table}[h]
  \centering
  \scriptsize
  \setlength{\tabcolsep}{4pt}
  \renewcommand{\arraystretch}{0.95}
  \caption{Probabilistic-quality metrics on test for three primary
  models, per (dataset, FM). NLL: negative log-likelihood (lower
  better). Brier: sum-of-squares Brier score (lower better). unECE
  (\%): uncalibrated Expected Calibration Error with $K=15$
  equal-width bins (lower better). MNL and adapter: mean $\pm$ std
  across $10$ bootstrap replicates; raw FM deterministic.}
  \label{tab:cal-metrics}
  \begin{tabular}{llccc}
    \toprule
    Cell                       & Model    & NLL                 & Brier                & unECE (\%)         \\
    \midrule
    Swissmetro / Mitra         & raw FM   & $0.532$             & $0.320$              & $2.4$              \\
                               & MNL      & $0.627 \pm 0.002$   & $0.348 \pm 0.001$    & $9.2 \pm 0.2$      \\
                               & adapter  & $0.661 \pm 0.009$   & $0.356 \pm 0.001$    & $11.1 \pm 0.4$     \\
    Swissmetro / TabPFN        & raw FM   & $0.524$             & $0.312$              & $2.2$              \\
                               & MNL      & $0.657 \pm 0.003$   & $0.351 \pm 0.001$    & $10.1 \pm 0.1$     \\
                               & adapter  & $1.086 \pm 0.024$   & $0.401 \pm 0.003$    & $18.6 \pm 0.4$     \\
    LPMC / Mitra               & raw FM   & $0.668$             & $0.370$              & $1.2$              \\
                               & MNL      & $0.725 \pm 0.001$   & $0.399 \pm 0.000$    & $3.8 \pm 0.2$      \\
                               & adapter  & $0.697 \pm 0.001$   & $0.386 \pm 0.000$    & $3.0 \pm 0.1$      \\
    LPMC / TabPFN              & raw FM   & $0.670$             & $0.369$              & $2.3$              \\
                               & MNL      & $0.724 \pm 0.001$   & $0.398 \pm 0.000$    & $3.7 \pm 0.1$      \\
                               & adapter  & $0.697 \pm 0.001$   & $0.386 \pm 0.000$    & $3.4 \pm 0.1$      \\
    IoT-Wearables / Mitra      & raw FM   & $0.793$             & $0.460$              & $2.2$              \\
                               & MNL      & $0.885 \pm 0.004$   & $0.497 \pm 0.002$    & $7.9 \pm 0.8$      \\
                               & adapter  & $0.820 \pm 0.005$   & $0.471 \pm 0.003$    & $3.6 \pm 0.8$      \\
    IoT-Wearables / TabPFN     & raw FM   & $0.796$             & $0.461$              & $4.3$              \\
                               & MNL      & $0.885 \pm 0.004$   & $0.496 \pm 0.002$    & $7.7 \pm 0.8$      \\
                               & adapter  & $0.820 \pm 0.006$   & $0.470 \pm 0.003$    & $4.2 \pm 0.9$      \\
    \bottomrule
  \end{tabular}
\end{table}

\section{Cross-fitted training $\mathbf{q}_i$: protocol and impact}
\label{sec:cross-fit-details}

The cross-fit is straightforward 5-fold OOF prediction. For each
fold $f$, we fit a fresh foundation model on the other four
training folds plus the full validation set as context, then predict
on fold $f$. Concatenating across folds gives a training
$\mathbf{q}_i$ where no row was predicted by a model that saw its
own label. Test $\mathbf{q}_i$ stays as a single fit on (train+val)
— test rows are out-of-context anyway. Validation $\mathbf{q}_i$
stays in-sample (it's only used for early stopping and temperature
fitting, where the leakage is harmless).

This protocol works on five of the six (dataset, FM) cells. The
sixth, TabPFN on LPMC, hit TabPFN's CUDA attention-kernel launch
ceiling at the per-fold context size ($\sim 69{,}000$ rows). Every
combination we tried failed: RTX 3090 and A100, cu126 and cu128
wheels, tabpfn==2.2.1 with autocast and with bfloat16, with and
without forced FlashAttention. We retain the in-sample $\mathbf{q}_i$
from the full-context fit for that cell only.

\paragraph{Empirical impact.}
The in-sample-vs-cross-fit gap on \emph{training} accuracy is large
on Swissmetro ($+9.9$ pp for Mitra, $+19.8$ pp for TabPFN) and small
on LPMC and IoT-Wearables ($\leq 1.0$ pp) — confirming that
in-sample $\mathbf{q}_i$ encodes substantial label memorization on
Swissmetro. Despite that, the adapter's \emph{test} accuracy moves
by at most $-0.6$ pp under cross-fitting on any cell, and on average
just $-0.2$ pp across the five cross-fitted cells. So $g$ wasn't
exploiting the leakage as a shortcut — it was learning patterns that
generalize.


\end{document}